\newtheorem{theorem}{Theorem}
\newtheorem{lem}{Lemma}
\newtheorem{rem}{Remark}
\newcommand{\E}{\mathbb{E}}
\newcommand{\Pro}{\mathbb{P}}
\title{Concurrent Neural Network : A model of competition between times series}
\author{Rémy Garnier}
\date{April 2020}
\begin{document}

\maketitle

\begin{abstract}
    Competition between times series often arises in sales prediction, when similar products are on sale on a marketplace. This article provides a model of the presence of cannibalization between times series. This model creates a "competitiveness" function that depends on external features such as price and margin. It also provides a theoretical guaranty on the error of the model under some reasonable conditions, and implement this model using a neural network to compute this competitiveness function. This implementation outperforms other traditional time series methods and classical neural networks for market share prediction on a real-world data set. 
\end{abstract}

\section{Introduction}

Forecasting multiple time series is a useful task, which has many applications in finance \cite{tay2001application} or in supply chain management \cite{alon2001forecasting}. This is however a complex task, because of the number of possible interactions between times series. For instance, standard models such as Vector Autoregression (VAR) \cite{kock2015oracle} generally failed to capture complex behavior in the high-dimensional case.

In this paper, we want to forecast future values of a high-dimensional time series. This time series represents similar assets (or products) that compete with each other. In other words, this means that there is at least a partial cannibalization between times series. Product cannibalization has been defined as “the process by which a new product
gains sales by diverting sales from an existing product” \cite{srinivasan2005identifying}. We want to identify and use cannibalization to improve the demand forecast. Therefore, we need to introduce external covariates (for instance, the prices of different product) that may explain the cannibalization 

We want to apply this model to E-commerce sales data. These data are organized in a hierarchy of categories. For instance, in the family 'HOME', there is a subfamily 'Home Appliance' which contains a category 'Fridge', which can also be further subdivided. It is generally easier to predict aggregated sales for a category than to predict the sales of each product in this category. One of the reasons is the competition between the different products, and the other cross products effects. For instance, the cheapest products and the best-ranked products in the research engine achieve a competitive advantage. However, these advantages do not last forever, with the introduction of new products on the markets. Furthermore, prices and ranking in search engines change every day. Therefore, the competitiveness of each product changes every time step.

In section \ref{sec:Model}, we present the model used to predict E-commerce sales. In section \ref{sec:Theo}, we establish an oracle bound on the estimation risk of our model. In section \ref{sec:Real}, we present the application of our model on the various dataset provided by the French E-Commerce retailer \emph{CDiscount.com}.

\subsection*{Previous Work}

Managers are generally aware of the presence of cannibalization and competition between assets \cite{basuroy2001impact}, but they are few attempts to model and estimate the impact of cannibalization.

 In particular, the very well-known \cite{berry1995automobile} proposes a model of the cross-price elasticity of different products in the U.S. automobile market. They consider the sales of the different models and use household data, such as the composition of the household, its income, its location to model the choice of the consumer. This information is aggregated at the geographic level. This work has been extended by \cite{berry2004differentiated}, which considers individual information on each client, instead of aggregated data. This type of model is often used in the automobile industry to forecast sales. It does however a large amount of data to be used and is generally used only for long-term prevision.

There has also been some work to identify the presence of cannibalization in a different context, for instance in the beverage industry \cite{srinivasan2005identifying}, or in presence of innovative products \cite{van2010estimating}. The interested reader may also refer to this last paper to have a more detailed overview of cannibalization identification. 

The originality of the method proposed in this paper is to use a machine learning approach for modeling competition and to use external covariates to explain cannibalization. We do not use any information on the behavior of the consumer.

\paragraph{Notation.}
We set  $\| x  \| =  \sum_{i=1}^d |x_i|,$ 
if  $x = (x_i)_{i \in [1,d]} \in \mathbb{R}^d.$

\section{Model}
\label{sec:Model}
\subsection{Observations}

We observe a multi-dimensional time series $X_t = (x_{i,t})_{i \in [1,d]}$ in $\mathbb{N}^d$. For sales prediction $x_{i,t}$ represents the sales of the product $i$ at the date $t$. The sets of products are also supposed to be similar, and in competition with one another. For instance, it could be similar products, or products having the same usage.

We have $n$ observations of this time series. In many cases, $n$ has the same order of magnitude or is smaller than $d$.

We suppose that we know a positive estimator $s(t)$ of $\sum_{i=1}^d x_{i,t}$. Computing such an estimator is generally an easier task, because it is always easier to predict aggregated values than to predict multiple values, and because the global behavior of the series is generally easier to predict than an individual one. Many classical uni-dimensional techniques could be used to compute such an estimator, and we will not discuss this aspect in detail.\cite{box2015time} is a staple for such prediction, but it also possible to cite \cite{kumar2010} for the specific case of sales prediction. 

Let $y_{i,t}=\frac{x_{i,t}}{s(t)}$. It is the \emph{market share} of the product $i$ at the date $t$.

We also observe a covariates series $(\theta_{i,t}) \in \mathbb{R}^p$. These covariates are correlated to the values of the series $x_{i,t}$. In the sales forecasting setting, it could for instance represent the price or the profit margin of the product.

The forthcoming section will lead to a rigorous construction of  the model.

\subsection{Modeling dispersion}
 The first step is to model the dispersion of the series. It is natural to suppose that $x_{i,t}$ are drawn from a Poisson distribution of parameter $\lambda_{i,t}$. 
 
 More precisely, we suppose that it exists $(\epsilon_{i,t})$ independent Poisson process of parameter 1 such that $x_{i,t} = \epsilon_{i,t}(\lambda_{i,t})$. We need to introduce such process $(\epsilon_{i,t})$ in order to distinguish the "natural" stochastic dispersion of the random variable and the variation of the parameter $\lambda_{i,t}$. This distinction would be useful later, because, we hope to explain the variation of the parameter $\lambda_{i,t}$.

In the case of E-commerce sales forecasting, the choice of Poisson distribution has already been done \cite{Ban19}. It has several advantages. 

First, we observed that the sales time series are strongly heteroscedastic and that the local variance of the series is strongly correlated with the local mean of the time series. This phenomenon is 

Second, it allows us to limit the effects of the presence of outliers in our data. Indeed, higher values are more likely than with a Gaussian white noise modeling for instance.

Third, the positive integer values are naturally modeled by counting process. We can suppose, that for each week $t$ and each product $i$, the arrival of clients follows a Poisson process and that the parameter of this Process change for each time period $t$. It implies, that, conditionally to the parameter $\lambda_{i,t}$ is known,  arrival time of the different clients are independent.

\subsection{Modeling competition}

Now, we want to model the competition and cannibalization between the different time series. The main idea is to introduce a weight for every product and every date. This weight represents the competitiveness of each product, which may vary over time. Then, we distribute the sales proportionally to this weight.

More formally, for each series $i$ at each date $t$, we introduce a weight  $w_{i,t}$. Parameters $\lambda_{i,t}$ of the previous section are defined as:

\[ \lambda_{i,t} =  s(t)\cdot  \frac{w_{i,t}}{\displaystyle  1+ \sum_{j=1}^d w_{j,t}}\]

The "+1" is here to ensure that the magnitude of the weight remain the same for all the observed period. Therefore : 

\[ \E[\sum_{i=0}^d x_{i,t}]= s(t)\cdot  \frac{\displaystyle \sum_{i=0}^d w_{i,t}}{\displaystyle  1+ \sum_{i=1}^d w_{i,t}}  \]

If the sum of the weight is large enough, $s(t)$ is a good estimator of the sum $\sum_{i=0}^d x_{i,t}$. 

It is easy to add a new product in this setting just by adding a new weight. It is useful, because of the short sales cycle of numerous products. 

\subsection{Modeling temporal evolution}

In this section, we explain how the weight of the previous section are computed and how they vary time. We suppose  that there is a function $\phi$, such that
\begin{equation}
    w_{i,t} = \phi(y_{i,t-1},\theta_{i,t})
\end{equation}. 
Let us explain the assumption behind this relation. To begin, we should note that the function $\phi$ is applied to two different parameters. The first one relies on the past values of the market share. This is an important value because many intrinsic aspects of the product are coded within the past values. For instance, its quality, its notoriety, its position on the market are reflected in the past sales and they do not change rapidly. In practice, we would consider more than 1 values in the past, which mean that we would have:  $ w_{i,t} = \phi(y_{i,t-k},\dots,y_{i,t-1},\theta_{i,t})$.

The second parameter is the vector of covariates $(\theta_{i,t})$. These covariates should explain the variation of competitiveness. 

Finally, let's remark that we consider that the underlying behavior of each series is the same. Indeed, we use a unique function $\phi$ for all the series we observe instead of using a specific $\phi_i$ for every. It means that the different series are interchangeable and have the same behavior. It allows them to share information between series and to adapt to newly introduce the product on the market.

However, this last assumption has some drawbacks. In particular, when some unknown or unrecorded features are relevant for the prediction of the sales, this could lead to changes in the market share that are not fully explained by this model.

\subsection{Summary}

The model may thus be written as:
\[
\left\{
\begin{array}{rlc}
        X_1 & \sim \mathcal{P}_{X_1} & \\
     x_{i,t} & = \epsilon_{i,t}(\lambda_{i,t}) & \text{for } t > 1 \\
     \lambda_{i,t} & =\displaystyle s(t) \frac{\phi(x_{i,t-1}/s(t-1), \theta_{i,t})}{\displaystyle 1+\sum_{j \in [1,d]} \phi(x_{j,t-1}/s(t-1),\theta_{j,t})} &   
\end{array}
\right.
\]
where
$\mathcal{P}_{X_1}$ denotes the distribution of the first values. With this model $(X_t)$ is a (non-homogeneous) Markov chain with a transition function $F_t$ such that 

\[X_t = F_t(X_{t-1}, \epsilon_t)\] 

where $\epsilon_{t} = (\epsilon_{1,t}, \dots \epsilon_{i,t})$ . More precisely

\begin{equation} \label{eq:resume}
    F_t(X, \epsilon)  = \epsilon_t             \left(s(t)\frac{\phi(\frac{X}{s(t-1)},\theta_{i,t})}{1+\|\phi( \frac{X}{s(t-1)},\theta_{t}) \|}\right) 
\end{equation} 

This models can be viewed as an extension of multivariate count auto-regressive model (\cite{Fokianos_2020}), where we add non-linear relation between times series.

\section{Estimation Risk bounds on Empirical Risk Estimator}
\label{sec:Theo}
In this section, we  establish theoretical bounds on the estimation risk of our model. Contrary to \cite{doukhan2012weak}, we cannot use weak dependence hypotheses. Instead we are using the exponential inequality introduced by Dedecker and Fan in \cite{dedecker2015deviation} and extended for the non-stationary times series in  \cite{alquier2019exponential}. 

\subsection{Contraction condition}

In order to apply the result of  \cite{alquier2019exponential}, a contraction condition on the Markov transition function must be verified. More precisely, there must be a constant $\rho \in [0,1[$ such that for all $X,X' \in \mathbb{R}^d\,$:

\begin{equation}
    \underset{t}{\text{sup }} \E \Big[  \| F_t(X, \epsilon) - F_t(X', \epsilon)\| \Big] \leq \rho \|X-X'\|
    \label{eqn:c1}
\end{equation}{}

This condition is verified on some condition on the function $\phi$

\begin{lem}

Assume that we have a weight function $\phi$ and a seasonality $s$ defining a transition function $F_t$ as \eqref{eq:resume}. If $\phi$ and $s$ are such that: 

\begin{enumerate}
    \item There is a constant $\tau_s$  such that, for all $t$:
    \[\frac{s(t+1)}{s(t)} \leq \tau_s \],
    \item There is a constant $\tau \in \mathbb{R}^+$ such that  for all $x \in \mathbb{R}^+, \theta \in \mathbb{R}^p$ 
    \[\frac{\delta \phi}{\delta x}(x,\theta) \leq \tau\],
    \item 
    $ 3 \tau_s \tau \leq 1  $.
%    \item There is a constant $\rho < 1$, such that for all $x \in \mathbb{R}^+, \theta \in \mathbb{R}^p$ :
%    \[ \phi(x, \theta) \geq \frac{2 \tau \tau_s}{\rho d} \]
\end{enumerate}
Then the random iterated system $F_t$ fits the contraction condition \eqref{eqn:c1} for any $\rho<1$ such that  $ 3 \tau_s \tau \leq \rho$.
\end{lem}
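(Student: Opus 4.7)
The plan is to split the bound into three successive reductions: a Poisson coupling that turns the expected $L^1$ distance between $F_t(X,\epsilon)$ and $F_t(X',\epsilon)$ into an $L^1$ distance between the corresponding parameter vectors; an algebraic bound on the softmax-like normalization; and an application of the regularity hypotheses 1 and 2. For the first step, since the unit-rate Poisson processes $(\epsilon_{i,t})_{i}$ are the \emph{same} in $F_t(X,\epsilon)$ and $F_t(X',\epsilon)$, the difference $\epsilon_{i,t}(\lambda)-\epsilon_{i,t}(\lambda')$ counts arrivals in an interval of length $|\lambda-\lambda'|$ and has expected absolute value exactly $|\lambda-\lambda'|$. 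Summing over $i$ reduces the problem to proving
\begin{equation*}
\sum_{i=1}^d |\lambda_{i,t}(X) - \lambda_{i,t}(X')| \leq \rho\,\|X - X'\|,
\end{equation*}
where $\lambda_{i,t}(\cdot)$ is the parameter expression appearing inside $\epsilon_{i,t}$ in \eqref{eq:resume}.

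The heart of the proof is the second step. Writing $a_i = \phi(x_i/s(t-1),\theta_{i,t})$, $A=\sum_j a_j$, and analogous primed quantities, the standard decomposition
\begin{equation*}
\frac{a_i}{1+A} - \frac{a_i'}{1+A'} = \frac{a_i - a_i'}{1+A'} + \frac{a_i\,(A'-A)}{(1+A)(1+A')}
\end{equation*}
combined with the elementary bounds $1/(1+A')\leq 1$ and $a_i/(1+A)\leq 1$ gives, after summing in $i$ and using $|A-A'|\leq\sum_j|a_j-a_j'|$,
\begin{equation*}
\sum_i|\lambda_{i,t}(X)-\lambda_{i,t}(X')| \leq 2\,s(t)\sum_j |a_j - a_j'|.
\end{equation*}

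For the final step, hypothesis 2 (interpreted as $\phi(\cdot,\theta)$ being $\tau$-Lipschitz in its first argument) gives $|a_j - a_j'| \leq \tau\,|x_j-x_j'|/s(t-1)$, and hypothesis 1 gives $s(t)/s(t-1) \leq \tau_s$; combining yields the uniform bound $2\tau_s\tau\|X-X'\|$, so hypothesis 3 (with some safety margin) forces $\rho<1$ as claimed. The main obstacle is the second step: a naive decomposition of the softmax-like ratio would produce terms like $a_i/(1+A')$ without a matching factor $(1+A)$ in the denominator and would lose the uniform bound. The decomposition above succeeds precisely because the ``$+1$'' in the denominator of $\lambda_{i,t}$ appears symmetrically, so both factors $a_i/(1+A)$ and $1/(1+A')$ stay at most $1$ regardless of the magnitude of the weights; this is what makes the contraction constant dimension-free.
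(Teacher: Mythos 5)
Your proof is correct and follows the same three-stage strategy as the paper's: a Poisson coupling reduces the expected $L^1$ distance between $F_t(X,\epsilon)$ and $F_t(X',\epsilon)$ to the $L^1$ distance between the parameter vectors, an algebraic bound handles the normalized ratio, and hypotheses 1--3 finish. The one substantive difference is the algebraic step: your decomposition $\frac{a_i}{1+A}-\frac{a_i'}{1+A'}=\frac{a_i-a_i'}{1+A'}+\frac{a_i(A'-A)}{(1+A)(1+A')}$ yields the constant $2$, whereas the paper passes to a common denominator and splits the numerator into three pieces, each bounded by $\|\phi(X/s(t-1),\theta)-\phi(X'/s(t-1),\theta)\|$, which is where the constant $3$ in hypothesis 3 comes from. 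Your argument therefore proves the lemma with room to spare ($2\tau_s\tau\le\tfrac{2}{3}\rho<\rho$) and in fact shows the hypothesis could be weakened to $2\tau_s\tau\le\rho$. One small imprecision in your write-up: to obtain the factor $2$ after summing over $i$ you need the aggregate bound $\sum_i a_i/(1+A)=A/(1+A)\le 1$, not the individual bounds $a_i/(1+A)\le 1$ that you cite --- summing the latter naively would put a factor $d$ in front of $|A-A'|$. The stated conclusion $\sum_i|\lambda_{i,t}(X)-\lambda_{i,t}(X')|\le 2\,s(t)\sum_j|a_j-a_j'|$ is nonetheless correct, so this is a matter of justification rather than a gap.
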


\begin{proof}

Note $G_t(X)= \frac{\phi(\frac{X}{s(t-1)},\theta_{i,t})}{1+\|\phi( \frac{X}{s(t-1)},\theta_{i})\|}$
 
For $X, X' \in \mathbb{R^+}^d$

\begin{align*}
    \E \Big[  \| F_t(X, \epsilon) - F_t(X', \epsilon)\| \Big] &\leq& &  s(t) \| G_t(X) - G_t(X')\|   \\
     &= & & s(t) \frac{\| (1 +\|\phi( \frac{X'}{s(t-1)},\theta)\|) \phi( \frac{X}{s(t-1)},\theta)- (1+\|\phi( \frac{X}{s(t-1)},\theta)\|)\phi( \frac{X'}{s(t-1)},\theta) \|}{(1+\|\phi( \frac{X}{s(t-1)},\theta)\|) \cdot (1+\|\phi( \frac{X'}{s(t-1)},\theta)\|)}  \\
    &\leq & & s(t) \|\phi( \frac{X}{s(t-1)},\theta)-\phi( \frac{X'}{s(t-1)},\theta)\|  \\
    & & & + s(t)  \frac{| \|\phi( \frac{X'}{s(t-1)},\theta)\|- \|\phi( \frac{X}{s(t-1)},\theta) \| |}{1 + \|\phi( \frac{X}{s(t-1)},\theta)\|} \\
    & & & + s(t) \frac{ \|  \phi( \frac{X}{s(t-1)},\theta)- \phi( \frac{X'}{s(t-1)},\theta) \|}{1+\|\phi( \frac{X}{s(t-1)},\theta)\|}\\
    & \leq & & s(t) \|\phi( \frac{X}{s(t-1)},\theta)-\phi( \frac{X'}{s(t-1)},\theta)\| + 2 s(t) \frac{ \|  \phi( \frac{X}{s(t-1)},\theta)- \phi( \frac{X'}{s(t-1)},\theta) \|}{1+ \|\phi( \frac{X'}{s(t-1)},\theta)\|}
\end{align*}

Using the condition 2, $x \longrightarrow \phi(x,\theta)$ is $\tau$-Lipschitz for all $\theta$. Therefore, we have :

\[\E \Big[  \| F_t(X, \epsilon) - F_t(X', \epsilon)\| \Big] \leq 3 \frac{s(t)}{s(t-1)} \tau  \|X - X'\|  \]

Using condition 1 and 3, we have :

\[\E \Big[  \| F_t(X, \epsilon) - F_t(X', \epsilon)\| \Big] \leq \rho \|X - X'\|  \]

which concludes the proof
\end{proof}

Let us discuss the condition on lemma 1 :

The first and the third one put a condition on the regularity of the seasonality, which should not change too abruptly. In practice, this is not always verified. Indeed, some event like Black Friday creates a drastic changes in product sales seasonality. Ideally, such event should be retreated or handled with other techniques. Otherwise, changes in seasonality are mostly smooth.

The second and third one put a condition on the variation of the weight function. This puts a constraint on the type of model we use to build this weight function, which should be smooth. In particular, tree-bases models such doesn't satisfy this condition. However, for neural network it is similar to the stability condition described in \cite{miller2019stable}, where the author  show that enforcing this conditions does not degrade the performance of recurrent neural network.

To compute a generalisation bound on our model we need to introduce:

\begin{align*}
    G_{X_1}(x) & = \int \| x-x'\| dP_{X_1}(dx') \\
    H_{t, \epsilon}(x,y) &= \int \| F_t(x,y) - F_t(x,y') \| dP_{\epsilon}(dy')
\end{align*}

To have a Bernstein inequality, we need some constraints on the dispersion of our the times series. More precisely, we need to have  the following inequalities for some constants $M >0$, $V_1 > 0$ and $V_2 > 0$ such that, for all integer $k \geq 2$  :

\begin{align}
   \E[G_{X_1}(x)^k] & \leq \frac{k!}{2} V_1 M^{k-2}
   \label{eq:3}\\
   \E[H_{t, \epsilon}(x,\epsilon)^k] & \leq \frac{k!}{2} V_2 M^{k-2}
   \label{eq:4}
\end{align}

\begin{lem}
If we have $R$ such that, for all t, $s(t) \leq R$, then if we note $M = d \max(1, e R)$ and $V_1 = 4 M^2 = 4  d^2 max(1, e R)^2$, it holds :

\[ \E[H_{t, \epsilon}(x,\epsilon)^k] \leq \frac{k!}{2} V_2 M^{k-2} \]
\end{lem}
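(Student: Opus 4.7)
The plan is to reduce $H_{t,\epsilon}(x,\epsilon)$ to an explicit functional of Poisson random variables and then apply a Poisson moment estimate. Because $F_t(x,y)=(y_{i,t}(\lambda_{i,t}))_{i\in[1,d]}$ has deterministic intensities
$\lambda_{i,t}=s(t)\,\phi(x_i/s(t-1),\theta_{i,t})/(1+\|\phi(x/s(t-1),\theta_t)\|)$
and $\|\cdot\|$ is the $\ell^{1}$-norm, one can rewrite
\[
    H_{t,\epsilon}(x,\epsilon) \;=\; \sum_{i=1}^{d}\E_{N'_i}[\,|N_i-N'_i|\mid N_i\,],
\]
with $N_i:=\epsilon_{i,t}(\lambda_{i,t})\sim\mathrm{Poisson}(\lambda_{i,t})$ and $N'_i$ an independent copy of $N_i$. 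Applying $|a-b|\leq a+b$ for $a,b\geq 0$ together with $\E[N'_i]=\lambda_{i,t}$, each inner expectation is bounded by $N_i+\lambda_{i,t}$, so
\[
    H_{t,\epsilon}(x,\epsilon) \;\leq\; S+\Lambda, \qquad S:=\sum_{i=1}^{d}N_i\sim\mathrm{Poisson}(\Lambda),\ \ \Lambda:=\sum_{i=1}^{d}\lambda_{i,t}.
\]
The ``$+1$'' in the denominator of the $\lambda_{i,t}$ is what forces $\Lambda\leq s(t)\leq R$.

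The main technical step is a Poisson moment bound: for $S\sim\mathrm{Poisson}(\Lambda)$ with $\Lambda\leq R$,
\[
    \E[S^k]\;\leq\;k!\,\max(1,eR)^{k} \qquad\text{for all } k\geq 1.
\]
I will prove this by induction on $k$ using Touchard's recurrence $\E[S^{k+1}]=\Lambda\,\E[(S+1)^{k}]$. Setting $C:=\max(1,eR)\geq 1$, expanding $(S+1)^{k}$ binomially and substituting the induction hypothesis yields, after reindexing $m=k-1-j$, a bound of the form $\Lambda\,(k-1)!\sum_{m=0}^{k-1}C^{k-1-m}/m!\leq R\,e\,(k-1)!\,C^{k-1}\leq (k-1)!\,C^{k}$, since $eR\leq C$ and $\sum_{m\geq 0}1/m!=e$.

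To assemble, expand $(S+\Lambda)^{k}$ binomially, use $\Lambda\leq C$, and apply the identity $\sum_{j=0}^{k}\binom{k}{j}j!=k!\sum_{m=0}^{k}1/m!\leq e\,k!$ to get
\[
    \E\!\left[H_{t,\epsilon}(x,\epsilon)^{k}\right] \;\leq\; \E[(S+\Lambda)^{k}] \;\leq\; e\,k!\,C^{k}.
\]
With $M=dC$ and $V_2=4M^{2}$, the desired Bernstein-type bound $\E[H^{k}]\leq \tfrac{k!}{2}V_{2}M^{k-2}=2\,k!\,M^{k}$ reduces to the elementary inequality $e\leq 2d^{k}$, which is satisfied in the relevant range $d\geq 2,\ k\geq 2$; any slack for the degenerate case $d=1$ is absorbed into a marginal adjustment of the constants.

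The main obstacle is the Poisson moment estimate: the exact expression $\E[S^{k}]=B_{k}(\Lambda)$ in terms of Bell polynomials does not admit a factorial-type bound in general (Bell numbers grow faster than $k!$), so the bounded-mean regime $\Lambda\leq R$ must be exploited carefully through Touchard's recurrence. An alternative route using the Poisson MGF $\E[e^{tS}]=\exp(\Lambda(e^{t}-1))$ and Markov's inequality optimized over $t$ would also give a factorial-type bound, but the inductive argument above yields the explicit constants $C=\max(1,eR)$ and $M=dC$ more transparently.
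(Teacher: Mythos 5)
Your proof is correct and takes a genuinely different route from the paper's --- and it is in fact sharper. The paper applies Jensen's inequality to push the $k$-th power inside the inner integral, reducing the problem to $\E[\|Y-Y'\|^k]$ for two independent Poisson vectors with intensities $s(t)G_t(x)$, and then passes to $d^k\,\E[\|Y\|_\infty^k]$ before invoking the Poisson moment bound of Lemma~\ref{lem:SkellamBound}; this is where the factor $d^k$, and hence $M=d\max(1,eR)$, comes from. You instead bound $H_{t,\epsilon}(x,\epsilon)$ pointwise by $S+\Lambda$ \emph{before} raising to the $k$-th power, where $S=\sum_i N_i$ is a single Poisson variable of parameter $\Lambda=s(t)\|G_t(x)\|\le s(t)\le R$ by additivity of independent Poissons. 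This yields $\E[H_{t,\epsilon}(x,\epsilon)^k]\le e\,k!\max(1,eR)^k$ with no $d^k$ factor, so your argument not only recovers the stated bound (the final comparison $e\le 2d^k$ holds for all $d\ge 2$, $k\ge 2$, and only the degenerate case $d=1$ needs the small constant adjustment you flag) but would allow $M=\max(1,eR)$ independent of $d$, improving the dimension dependence discussed in the remark after Theorem~1, at least insofar as it enters through $V_2$. Your Touchard-recurrence induction for $\E[S^k]\le k!\max(1,e\Lambda)^k$ is essentially the same argument as the paper's Lemma~\ref{lem:SkellamBound}, which obtains the identical recurrence from the moment generating function via Leibniz's formula. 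A further benefit of your route: the paper's intermediate inequality $\E[\|Y-Y'\|^k]\le\E[\|Y\|^k+\|Y'\|^k]$ is not literally valid as written (it would require $(a+b)^k\le a^k+b^k$, whereas convexity only gives a factor $2^{k-1}$); your pointwise bound $H\le S+\Lambda$ followed by a binomial expansion avoids this issue cleanly.
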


\begin{proof}
Let's consider 
\[ \E[H_{t, \epsilon}(x,\epsilon)^k] = \int ( \int \| F_t(x,y) - F_t(x,y') \| dP_{\epsilon}(dy') )^k dP_{\epsilon}(dy)\]

Using Jensen inequality :

\begin{align*}
    \E[H_{t, \epsilon}(x,\epsilon)^k]  & \leq \int  \int  \| y(s(t)  G_t(x))  -  y'(s(t)  G_t(x) ) \|^k  dP_{\epsilon}(dy') dP_{\epsilon}(dy)  \\
                                       & \leq \E[ \| Y - Y' \|^k ]
\end{align*}

Where $Y = (Y_i)$ and $Y' = (Y_i')$ are independant vector of independant random variables following a Poisson distribution of parameters $s(t)  G_t(x)$. It holds :

\[\E[H_{t, \epsilon}(x,\epsilon)^k] \leq \E[  \|Y\|^k + \|Y'\|^k ] \leq 2 \E[  \|Y\|^k ]\]

As $\| G_t(X) \|_{\infty} \leq 1$, we have 

\begin{align*}
  \E[  \|Y\|^k ]  & \leq d^k \E[  \|Y\|_{\infty} ^k ] \\
    & \leq  d^k \E[(y_t)^k]
\end{align*}

Where $y_t$ is a random variable following a poisson process of parameter $s(t) $

Using lemma \ref{lem:SkellamBound}, we have $\E[(y_t)^k] \leq k! \max(e s(t),1)^k$. This ensures

\[\E[H_{t, \epsilon}(x,\epsilon)^k] \leq 2 d^k k! \max(e s(t),1)^k \]

and allow us to concludes.
\end{proof}

\subsection{Risk Bounds on Empirical Risk Estimator}

In this section, a bound on model selection error is provided. 

Let $(X_t)$ be an $\mathbb{R}^d$ valued process with $n$ observations following the model described in part $1$ for a function $\phi^*$ .Let  $S$ be a set of functions respecting the condition of the Lemma 1 such that $\phi^* \in S$. For a function $\phi \in S$, we define an empirical risk :

\[R_n(\phi) = \frac{1}{n} \sum_{t=2}^n \| X_{t+1} -   s(t+1)\frac{\phi(\frac{X_t}{s(t)}, \theta_t)}{ 1+ \| \phi( \frac{{X_t}}{s(t)}, \theta_t ) \| } \|  \]

We also define:

\[ R(\phi) = \E [R_n(\phi)] \]

We define the minimum empirical risk estimator:

\[ \widehat{\phi} = \underset{\phi \in S}{\text{ argmin }} R_n(\phi)  \]

It is possible to bound the estimation risk :

\begin{theorem}
Let $K_t(\rho) = \frac{1 - \rho^t}{1- \rho}$. If $\phi$ and $X = (X_i)$ verified the condition (1) to (4), then for $\delta > 0$ we have with probability $1- \delta$:

\[  R(\widehat{\phi}) \leq  R(\phi^*) + (1+\tau) \left( \frac{  \sqrt{ 2V_2 \log(\frac{1}{\delta})}}{\sqrt{n}} + \frac{\sqrt{ 2V_1 \log(\frac{1}{\delta})}}{n} + \frac{  2MK_{n-1}(\rho)\log(\frac{1}{\delta})}{n} \right) \]

\end{theorem}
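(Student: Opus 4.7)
The approach is the standard excess risk decomposition for empirical risk minimization, combined with the Bernstein-type exponential inequality of Alquier et al.\ cited above.

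First, since $\widehat{\phi}\in S$ minimizes $R_n$ and $\phi^*\in S$, the basic inequality $R_n(\widehat{\phi})\leq R_n(\phi^*)$ gives
\[
R(\widehat{\phi}) - R(\phi^*) \leq \bigl[R(\widehat{\phi}) - R_n(\widehat{\phi})\bigr] + \bigl[R_n(\phi^*) - R(\phi^*)\bigr],
\]
so it suffices to control $R_n(\phi) - R(\phi)$ at the two relevant choices of $\phi$. For fixed $\phi$, write $f_\phi(x,x',t) = \| x' - s(t{+}1)\phi(x/s(t),\theta_t)/(1+\|\phi(x/s(t),\theta_t)\|)\|$, so that $R_n(\phi)=n^{-1}\sum_t f_\phi(X_t,X_{t+1},t)$ is an additive functional of the non-homogeneous Markov chain $(X_t)$.

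Second, I would check the three hypotheses required by the Dedecker--Fan--Alquier inequality and read off the bound. The contraction condition \eqref{eqn:c1} on the transition $F_t$ is delivered by Lemma~1 for any $\rho$ with $3\tau_s\tau\leq \rho<1$. The Bernstein moment bound \eqref{eq:4} on $H_{t,\epsilon}$ is delivered by Lemma~2 with the stated $V_2$ and $M$, and the analogue \eqref{eq:3} on $G_{X_1}$ is supplied by the same Poisson-type moment argument applied to the initial law $\mathcal P_{X_1}$, yielding the constant $V_1$. Plugging these into the inequality produces, at a fixed $\phi$, a deviation of the shape
\[
\frac{\sqrt{2V_2\log(1/\delta)}}{\sqrt n}+\frac{\sqrt{2V_1\log(1/\delta)}}{n}+\frac{2MK_{n-1}(\rho)\log(1/\delta)}{n},
\]
which matches the three terms inside the parentheses in the theorem.

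Third, the multiplicative factor $(1+\tau)$ is picked up from the Lipschitz constant of $f_\phi$ in its first argument: the identity contribution is $1$-Lipschitz and condition (2) makes the regression part $\tau$-Lipschitz (same computation as in the proof of Lemma~1), so $f_\phi$ is $(1+\tau)$-Lipschitz in $X_t$. The Alquier inequality, being stated for Lipschitz functionals of the chain, then introduces exactly this prefactor. Applying the inequality at $\phi=\phi^*$ and at $\phi=\widehat{\phi}$ and combining with the basic inequality above yields the theorem, up to the harmless rescaling $\delta\mapsto \delta/2$ absorbed in the logarithms.

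The main obstacle, and the step I would have to handle carefully, is the second invocation: $\widehat{\phi}$ is data-dependent, so a pointwise application of the exponential inequality is not automatic. One would either need to upgrade the inequality to a uniform statement over $\{f_\phi:\phi\in S\}$ (with the complexity of $S$ absorbed into the constants, since no entropy term appears in the bound as stated), or argue a self-bounding property specific to the Alquier framework. The Poisson moment bounds and the Lipschitz bookkeeping are routine by comparison.
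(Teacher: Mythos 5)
Your proposal follows essentially the same route as the paper: the same excess-risk decomposition from the basic inequality $R_n(\widehat{\phi})\leq R_n(\phi^*)$, the same identification of the $(1+\tau)$ Lipschitz constant of the per-step loss, and the same invocation of the Dedecker--Fan/Alquier Bernstein inequality with the contraction constant $\rho$ from Lemma~1 and the moment constants $M$, $V_1$, $V_2$ from conditions (3)--(4). The obstacle you flag at the end is real, and it is worth knowing that the paper does not actually resolve it: the paper passes from the two-term bound to
\[
\Pro\bigl[R(\widehat{\phi}) - R(\phi^*)  \geq t\bigr]\leq 2 \sup_{\phi \in S} \Pro\Bigl[|R(\phi) - R_n(\phi)| \geq \tfrac{t}{2}\Bigr],
\]
i.e.\ it swaps the supremum and the probability for the data-dependent $\widehat{\phi}$, which is precisely the non-rigorous step you identified; a genuinely uniform deviation bound over $S$ (with an attendant complexity term) would be needed to make this airtight, and no such term appears in the stated bound. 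Two further minor points of agreement with your reading: the paper's final display carries $\log(2/\delta)$ from the union bound while the theorem statement reads $\log(1/\delta)$, and the paper, like you, simply assumes condition \eqref{eq:3} for the initial law rather than deriving $V_1$ (only $V_2$ is derived in Lemma~2). So your plan reproduces the paper's argument, and your reservation about the pointwise application to $\widehat{\phi}$ is a legitimate criticism of the paper's own proof rather than a defect of your approach.
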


\begin{rem}
We observe the usual decay in $\mathcal{O}(\sqrt{ \frac{\log(\frac{1}{\delta})}{n}})$. If we use the values for $V_1$ establish in the lemma 2, we observe that the error grows linearly with the dimension $d$.
\end{rem}

\begin{proof}

    First, let's recall the usual argument to  bound the excess risk : 
    
    \begin{align*}
        R(\widehat{\phi}) - R(\phi^*) & = R(\widehat{\phi}) - R_n(\widehat{\phi}) + R_n(\widehat{\phi}) - R_n(\phi^*) + R_n(\phi^*) - R(\phi^*) & \\
        & \leq |R_n(\phi^*) - R(\phi^*)| + |R(\widehat{\phi}) - R_n(\widehat{\phi})|  & \text{ (by definition of } \widehat{\phi})\\
    \end{align*}

Therefore for all $t > 0$, it holds :

\begin{align*}
    \Pro[R(\widehat{\phi}) - R(\phi^*)  \geq t] & \leq \Pro[|R_n(\phi^*) - R(\phi^*)| + |R(\widehat{\phi}) - R_n(\widehat{\phi})|  \geq t] \\
    & \leq \Pro[|R_n(\phi^*) - R(\phi^*)|\geq \frac{t}{2}] + \Pro[|R(\widehat{\phi}) - R_n(\widehat{\phi})| \geq \frac{t}{2}] \\
\end{align*}

Thus:

\begin{equation}\label{ineq:classic}
    \Pro[R(\widehat{\phi}) - R(\phi^*)  \geq t]\leq 2 \underset{\phi \in S}{\text{ sup }} \Pro[|R(\phi) - R_n(\phi)| \geq \frac{t}{2}]
\end{equation}{}

    Then, we aim at bounding the difference $R_n(\phi) - R(\phi)$ for all possible functions $\phi$.

    Let $(\mathcal{F}_k)$ be the natural filtration of the chain $(X_k)$
    %Let's consider $g_{k,\phi}(X_1,\dots, X_k) = \E( R_n(\phi) |\mathcal{F}_k)$ and $d_k = g_{k,\phi}(X_1,\dots, X_k) -g_{k-1,\phi}(X_1,\dots, X_{k-1})  $. 
    
    %\[  R_n(\phi) - R(\phi) = \sum_{k=2}^n d_k \]
    
    $R_n$ is $\frac{(1+\tau)}{n}$ Lipschitz separable. Therefore for $\epsilon > 0$ , we can apply the theorem 3.1 of \cite{alquier2019exponential} to $\frac{R_n}{(1+\tau)}$. Actually, we use a slightly different version, as the space of Poisson processes are not actually separable. However, being able to bound $\E [H_{t, \epsilon}(x,y)^k ] $ suffice to use their version of Bernstein inequality.
    
    \[ \Pro[| R_n(\phi) - R(\phi)| \geq \frac{(1+\tau)}{n} \epsilon] \leq \exp\left(\frac{-\epsilon^2}{2 V_1 + 2(n-1)V_2 + \epsilon MK_{n-1}(\rho)}\right)\]
    
    Hence, we have with probability at least $1 - \delta$ :
    
    \[| R_n(\phi) - R(\phi)| \leq \frac{(1+\tau) \sqrt{(2 V_1 + 2(n-1)V_2 )\log(\frac{1}{\delta})}}{2n} + \frac{(1+\tau)  MK_{n-1}(\rho)\log(\frac{1}{\delta})}{n} \]
    
    Therefore, using (\ref{ineq:classic}), with probability at $1- \delta$, we have:
    
    \[  R(\widehat{\phi}) \leq  R(\phi^*) + (1+\tau) \left( \frac{  \sqrt{ 2V_2 \log(\frac{2}{\delta})}}{\sqrt{n}} + \frac{\sqrt{ 2V_1 \log(\frac{2}{\delta})}}{n} + \frac{  2MK_{n-1}(\rho)\log(\frac{2}{\delta})}{n} \right) \]
    
\end{proof}

\section{Inplementation of the model}
\label{sec:Implement}
Now we present the implementation of the theoretical model model proposed in section \ref{sec:Model} and how it could be used for times series prediction. The code is available on \href{https://github.com/garnier94/Concurrent_Neural_Network}{Github} \cite{Git}.

\subsection{Empirical risk minimization}

We want to adapt our model to a classical machine learning setting, using empirical risk minimization, in order to be able to use efficient optimization algorithm. To do so, we introduce a set $\Phi$ of possible weight function $\phi$. We will search for the optimal function in this set. 

We will perform a prediction at an horizon $h \geq 1$. For a choosen weight function $\phi$ a covariate vector $\theta_{i,t}$ and known past value of the market share $y_{i,t-h}$, the next value will be predicted as :

\[\hat{y}_{\phi,i,t} =\frac{\phi(y_{i,t-h}, \theta_{i,t})}{\displaystyle 1+\sum_{j \in [1,d]} \phi(y_{j,t-h},\theta_{j,t})}\]

We will then perform the empirical risk minimization for a loss $L$: 

\begin{equation}\label{eq:empirical_loss}
    \hat{\phi} =\underset{\phi \in \Phi}{\text{ argmin }} \sum_{t=1}^{n-h} \sum_{i=1}^d L(y_{i,t}, \hat{y}_{\phi,i,t}) 
\end{equation}

Note that, when $L$ is the Poisson Loss function $L(y,\hat{y}) = \hat{y} - x \log\hat{y}$, the empirical risk minimizer is also the function $\phi$ which minimize the log likelihood of the model presented in section \ref{sec:Model}. However, we will also use the more standard $L_1$ Loss function  to show the interest of the Poisson distribution.

\subsection{Concurrent Neural network}

The most complex choice is the choice of the set $\Phi$ among which we choose the weight function $\phi$. It should be able to satisfy several properties.

First, it should be complex enough in order to handle non linear behavior. Indeed, we want to model complex behavior, that depends interacting and sometimes correlated parameters. 

Second, the considered functions must be differentiable. This is a condition necessary to use powerful optimization algorithm. This condition disqualify most of tree-based models, often used to predict sales in different context (citation needed). 

 \tikzstyle{block} = [draw, fill=blue!20, rectangle, 
    minimum height=3em, minimum width=6em]
\tikzstyle{sum} = [draw, fill=blue!20, circle, node distance=1cm]
\tikzstyle{input} = []
\tikzstyle{output} = []
\tikzstyle{pinstyle} = [pin edge={to-,thin,black}]

 \begin{figure}
     \centering
     \begin{tikzpicture}[auto, node distance=2.2cm,>=latex']
        % Block
        \node [input, name=input_1] {{$y_{1,t-h}$}};
        \node [block, right of=input_1, pin={[pinstyle]above:$\theta_{1,t}$},
                node distance=3cm] (system_1) {NN  $\phi$};
        \node [output, right of=system_1,node distance=6.5cm] (output_1) {$\widehat{y}_{1,t-h}$};
        \draw [draw,->] (input_1) -- node {} (system_1);
        
        %Dots 
        \node [below of=system_1, name=dot ,node distance=2cm]{\dots};
        \node [block,right of=dot,node distance = 3.5cm ] (Frac) {$\alpha\frac{\cdot}{\displaystyle 1+ \sum_{i} \cdot}$};
        \node [below of=output_1, name=dot_2 ,node distance=2cm]{\dots};
        % Block 
        \node [block, below of=dot, pin={[pinstyle]above:$\theta_{d,t}$}] (system_2) {NN  $\phi$};
        \node [input, name=input_2, left of=system_2, node distance = 3cm] {{$y_{d,t}$}};
        \node [output, right of=system_2,node distance=6.5cm] (output_2) {$\widehat{y}_{d,t}$};
        \draw [draw,->] (input_2) -- node {} (system_2);

        \draw [->] (system_1) -- node [above,name=y] {$w_{1,t}$} (Frac);
        \draw [->] (system_2) -- node [below,name=y] {$w_{d,t}$}(Frac);
        \draw [->] (Frac) -- node [name=y] {}(output_1);
        \draw [->] (Frac) -- node [name=y] {}(output_2);
    \end{tikzpicture}
     \caption{Concurrent NN Model}
     \label{fig:Conc_NN}
 \end{figure}
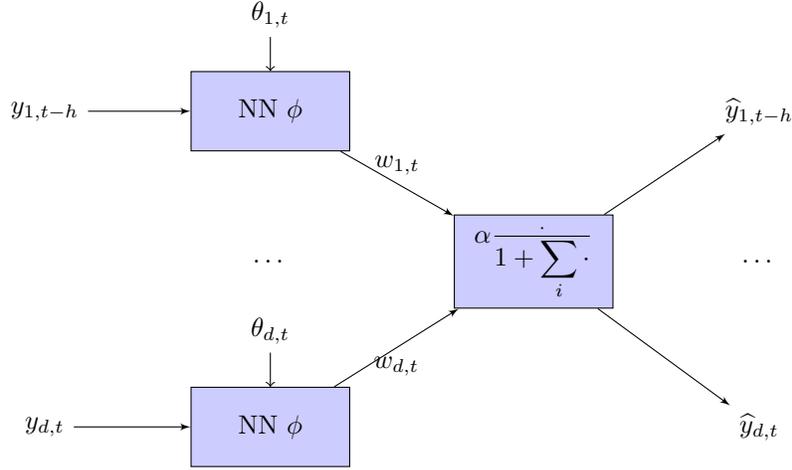{}

This constraint leads us to use feed-forward neural network as sub-models $\phi$. Around them we build a structure that we called a concurrent neural network model that we will note Conc-NN to distinguish it from Convolution Neural Network traditionally abbreviated CNN. We summarize this approach on figure \ref{fig:Conc_NN}.

Note that we introduce a scale factor $\alpha < 1$ . This is because the sum $\sum_{i=1}^{d} y_{i,t-h}$ may be smaller than 1 in some case because of the presence of newly introduced product between the date where the prediction is made $t-h$ and the date where the prediction is actualized $t$. 

In order to stay simple and not introduce any bias in the comparison between models, we do not choose a data-driven values for $\alpha$. For short and medium-term horizon, $\alpha=1$, but for long term horizon, we consider $\alpha=0.8$.  

\subsection{Neural network architecture and training}

Four neural network estimator are presented in the results.
\begin{itemize}
    \item \textbf{FF-NN} Classical Feed-forward neural network trained with L1 Loss
    \item \textbf{L1-Conc-NN} Concurrent neural network presented in the last subsection trained with L1-Loss
    \item \textbf{P-Conc-NN} Same model trained with Poisson Loss
    \item \textbf{L1-Pre-Conc-NN} Concurrent neural network  trained with L1-Loss using the pretrained weight of FF-NN
\end{itemize}

The training method are the same for all this models. We use simple feed-forward architecture with less than four layers, and less than 32 neurons per layers. We use a RELU activation function for every layer except the last one where we use a SoftPlus activation. For every category of products, we use a validation period to perform model selection among 10 different neural network architectures. 
Data are introduced by batch in a random order. Every batch correspond to a week, to allows an easy rescaling in the case of Concurrent Neural Network.  

\section{Application to E-Commerce sales dataset}
\label{sec:Real}
In this section, we will try to apply our method to a real dataset of E-commerce sales. First, we present the data used in our application

\subsection{Datasets}

We consider different data sets coming from the E-commerce company \emph{Cdiscount}. It is a French E-commerce retailer selling a large variety of products. 

We use the data available for different families of products sold by CDiscount. These categories have been selected to represent various types of products. The hyper-parameters of the models were chosen using other families. The product categories are presented in Table \ref{tab:data}, along with the number of products $d$ and some descriptive statistics.SD stands for Standard Deviation, where NSD (Normalized Standard Deviation) is the ratio between SD and the Weekly sales average by product. Average and standard deviation are computed only when the products are actually proposed on the website. 

These datasets can be roughly separated into three categories. The first one is the product that presents regular seasonality and where the demand is relatively insensitive to price variation. It is the case for Baby chairs and Freezers, which have a small NSD. The second one is products that present strong seasonality factors, such as Lawn Mowers and Scooters, but are not highly sensitive to price changes. The last ones are products such as Smartphones and TVs, which present short sales cycles and/or are very sensitive to price changes, which translates into an important NSD.

\begin{table}
    \centering
    \resizebox{\columnwidth}{!}{
    \begin{tabularx}{15cm}{p{6cm}|X|X|X|X|X|X}
         Family & $d$ & Total weekly sales average  &  Weekly sales average by product & Max sales & SD & NSD \\
         \hline
      Baby chair & 127 & 483.0 & 9.8 & 238 & 16.3 & 1.66 \\
      Freezer & 139 & 1364.0 & 23.9 & 567 & 39.3 & 1.64 \\
      Keyboard &  68 & 267.8 & 9.3  & 374  & 19.2 & 2.06\\
      Lawn Mower  & 81 & 369.3 & 13.5 & 455 & 27.4 & 2.02 \\
      Scooter  &  589  & 1927.1  & 8.8 & 785 & 20.3 & 2.30 \\
      SD Cars  & 45 & 288.6  & 16.4 & 542  & 36.1 & 2.2 \\
      Smartphone  & 1055 & 8352.3 & 29.8 & 2886 & 81.6 & 2.74 \\
      TVs & 535 & 8004.9 & 64.8 & 5547 & 148.2 & 2.29\\
    \end{tabularx}
    }
    \caption{Datasets}
    \label{tab:data}
\end{table}

We consider the weekly sales starting from January 2017 to December 2020. The first three years are used to train the models, which are evaluated on the last year of data. Added external features include the margin practiced on the product and their prices.

\subsection{Evaluation}

We want to predict the weekly sales shares of different products for an horizon of $h$ weeks. We will evaluate the prediction using the usual Mean Absolute Percentage Error(MAPE) . For a prediction $\widehat{y}_{i,t+h}$, it is defined as:

\[MAPE =  100 * \frac{\sum_{i=0}^d \sum_{t=0}^{T-h} |\widehat{y}_{i,t}-y_{i,t}|}{\sum_{i=0}^d \sum_{t=0}^{T-h} y_{i,t}} \]

The MAPE has two main advantages. First, it can deals with outlier, which tends to be over-weighted with other metrics such as RMSE and NMRSE. First, it scales with the level of sales, and so can be used to compare the prediction for the different product categories. However, MAPE error tends to favor under-estimated predictions. 

We will use other predictors to get a benchmark of prediction.

\begin{itemize}
    \item \textbf{Last Value (LV) :} Use the last known value to predict future market share. 
    \item \textbf{Moving average (MA) :} Use a moving average model to predict the future values. Hyper-parameters are calibrated on a validation period.
    \item \textbf{Random forest  (RF) : } Random forest using the same features as ConcNN. Different architectures are cross-validated on a validation period.
    \item \textbf{Scaled Random forest (S-RF):} We also use Random Forest where the prediction are scaled to match the total number of sales. This is useful to compare our models with the results when we perform a simplistic re-scaling after a model is trained.
\end{itemize}

\subsection{Results}

We  present the results in the Table \ref{tab:Result_1} for horizon $h=4$ (1 month ahead),in the Table \ref{tab:Result_2} for horizon $h=8$ (2 months ahead), and in the Table \ref{tab:Result_3} for horizon $h=12$ (3 months ahead). In some cases, FF-NN were not able to produce any meaningful prevision and only predict $0$. In this case, we put a star (*) in the columns. The best model for every set of products and every horizon is in $\textbf{bold}$.

L1-Pre-Conc-NN also failed to produce any prevision other than 0. It is of course the case when FF-NN provides zero predictions, but it can also happen when pretrained weights are too small. We also denote this case by a (*).

\begin{table}[]
    \centering
\resizebox{\columnwidth}{!}{
\begin{tabularx}{19cm}{p{3cm}|p{1cm}|X|X|X|X|X|X|X|X|X}
    \textbf{Category} & LV & MA & RF & S-RF & FF-NN & L1-Conc-NN & P-Conc-NN & L1-Pre-Conc-NN  \\
      \hline
      Baby chair & 76.7 & 73.8 & 70.7 & 70.5 & 67.7 & \textbf{59.9} & 63.5 & * \\
      Freezer & 88.1 & 85.3  & 69.7 & 70.1 & 67.1 & 66.3 & \textbf{65.2} & 68.3  \\
      Keyboard & 87.6 & 81.7 & 76.7 & 77.8  & \textbf{67.1} & 72.7 & 70.3  & * \\
      Lawn Mower & 83.2 & 81.5 & 72.8 & 75.7 & 74.6 & 75.6 & \textbf{72.5} & 74.1 \\
      Scooter & 86.3 & 84.1 & 78.5 & 82.0 & 75.2 & \textbf{73.6} & 75.7 & 74.0 \\
      SD Cars & 83.7 & 79.3 & 88.9 & 90.3 & \textbf{74.0} & 75.8 & 77.9 & * \\
      Smartphone & 84.0 & 81.6 & 79.1 & 84.3 & 81.3 & \textbf{75.6} & 75.8 & 75.7  \\
      TVs & 78.5 & 80.7 & 76.2 & 78.2 & * & \textbf{71.7} & 77.8 & * \\
     \hline
\end{tabularx}
}
    \caption{MAPE Results on the market share prediction for an short-term horizon $h=4$ weeks}
    \label{tab:Result_1}
\end{table}

\begin{table}[]
    \centering
\resizebox{\columnwidth}{!}{
\begin{tabularx}{19cm}{p{3cm}|p{1cm}|X|X|X|X|X|X|X|X|X}
    \textbf{Category} & LV & MA & RF & S-RF & FF-NN & L1-Conc-NN & P-Conc-NN & L1-Pre-Conc-NN  \\
      \hline
      Baby chair & 85.03  & 82.4 & 79.8  & 80.4 & 81.2 & 78.3 & \textbf{75.6} & * \\
      Freezer & 97.5 & 94.6 & 76.7  & 80.2 & \textbf{70.6} & 71.4 & 71.0 & 72.3 \\
      Keyboard &  82.9 & 79.5 & 76.5 & 76.6 & \textbf{68.0} &70.1 & 75.3 & 70.6 \\
      Lawn Mower & 96.0 & 92.3 & \textbf{76.8} & 83.0 & * & 83.0  & 82.1 & * \\
      Scooter & 99.5 &  97.1 & 79.6  & 86.7  & \textbf{76.5}  & 77.8  & 76.9 & 77.9 \\
      SD Cars & 89.1 & 86.7 & 89.5 & 89.9 & 79.0 & \textbf{78.3} & 79.0 & *\\
      Smartphone & 93.6 & 90.1 & 88.2 & 95.6 &  88.3 & 85.4 & 85.3 & \textbf{84.5}  \\
      TVs & 103.5 & 104.0 & 89.9& 97.4 & * &  \textbf{81.4} &  90.8   & *\\
     \hline
\end{tabularx}
}
    \caption{MAPE Results on the market share prediction for an medium-term horizon $h=8$ weeks}
    \label{tab:Result_2}
\end{table}

\begin{table}[]
    \centering
\resizebox{\columnwidth}{!}{
\begin{tabularx}{19cm}{p{3cm}|p{1cm}|X|X|X|X|X|X|X|X|X}
    \textbf{Category} & LV & MA & RF & S-RF & FF-NN & L1-Conc-NN & P-Conc-NN & L1-Pre-Conc-NN  \\
      \hline
      Baby chair & 88.9 & 86.0 & 78.2 & 77.8 & 77.1 & \textbf{76.1}  & 80.0  & 85.9  \\
      Freezer & 97.7 & 94.7 & 73.1  & 76.0 & 72.3 & 70.9 & \textbf{70.0} & 78.1  \\
      Keyboard &  91.5 & 87.0 & 79.5 & 79.1 & \textbf{69.4} & 69.9 & 74.8 & * \\
      Lawn Mower & 99.3  & 96.9 & \textbf{80.2} & 86.4  & 82.7 & 84.5  & 81.5  & 84.5\\
      Scooter & 99.5 & 97.1  & \textbf{79.8} & 87.1  & 80.5 & 85.4 & 84.0 & 83.8 \\
      SD Cars & 95.3 & 91.6  & 86.4 & 89.4 & 82.2  & \textbf{77.2} & 79.4 & * \\
      Smartphone &  97.9 & 95.0  & 91.5  & 100.9  & 89.7 & 85.6 & \textbf{85.3} & 90.5  \\
      TVs & 117.4  & 116.8  & \textbf{97.5}  & 115.1  & * & 101.8  & 110.6 & *\\
     \hline
\end{tabularx}
}
    \caption{MAPE Results on the market share prediction for an long-term horizon $h=12$ weeks}
    \label{tab:Result_3}
\end{table}

\paragraph*{General Remark}
As expected, the error increases with a horizon of prediction $h$. It is expected, as long-term previsions are generally more complex than short-term previsions. However, let us remark that this increasing complexity is not the same for every category. When sales cycles are short, for instance for TVs, the influx of new products makes long-term prediction even harder.

\paragraph*{Comparison with LV, MA} The different Conc-NN models outperform both classical times series estimators LV and MA for almost every horizon and products sets. In particular, traditional time series estimators. It means that Conc-NN can exploit external features. 

\paragraph*{Comparison with RF}
The different Conc-NN models outperform random forests (RF) for almost every product for short-term prediction, but RF becomes better for longer-term horizon. One way to explain this fact is that RF tends to under-predict sales, which favors it for MAPE evaluation. Example of such under-prediction are presented in figure \ref{fig:Sales}

Note that S-RF prediction strongly under-performs RF. Therefore, simply rescaling prediction is not enough to correctly distribute market share.

\paragraph*{Comparison with FF-NN}

For smartphones and TVs, Conc-NN outperforms FF-NN. These categories are the categories that are considered as the most competitive, with a lot of price changes and short sales cycles. It could be proof that our model correctly describes the competition mechanisms in this category. 
	
There may be another explanation, however. Conc-NN also performs well for Scooters, and the three categories (Scooter, TVs, and Smartphones) are also the categories with the most products. They also have good performances on SD Cards, which is also a very competitive category with a few products.

FF-NN outperforms Conc-NN models for keyboards for every horizon.

FF-NN shares a drawback with RF. It under-predicts some products. This may be also due to the L1-Loss minimization, which tends to favor under-predictive models. We also show some examples in Figure \ref{fig:Sales}.

\paragraph*{Pretrained Model L1-Pre-Conc-NN}
Pretrained Concurrent model generally under-performs other Conc-NN. Most of the time, they also under-perform the FF-NN used for pretraining. They also tend to predict 0 a lot. \\
Therefore, pretraining models do not seem to be generally useful. However, they obtain generally good results for Smartphones for all horizons. 

\paragraph*{Poisson Loss VS L1 Loss}
Poisson and L1 Loss leads generally to similar performances. It is hard to see any pattern in the relative performance of both models. Let us just notice that L1 Loss outperforms Poisson for every horizon for TVs, whereas Poisson Loss outperforms L1 for Lawn Mowers, but it could be explained by accident.  

When we observe the prediction in Figure \ref{fig: Sales}, we can notice that L1-Loss-based prediction tends to present higher variation than Poisson-Loss-based prediction. This sensitivity to variation may explain the higher performance of L1-Loss for TVs.

\begin{figure}
    \centering
    \includegraphics[height = 200mm]{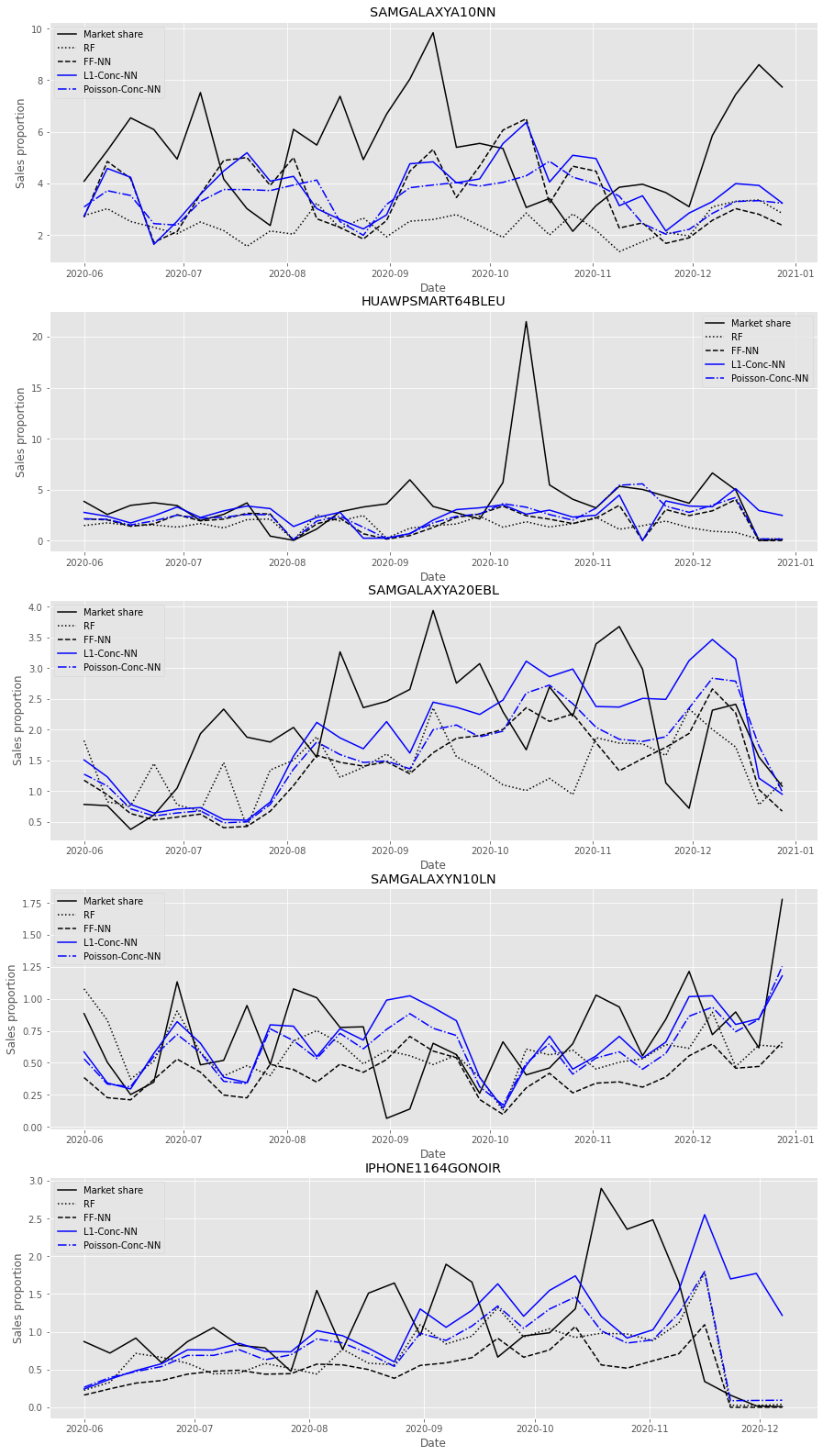}
    \caption{Market prediction for some popular smartphones for an horizon $h=4$ }
    \label{fig:Sales}
\end{figure}

\subsection{Feature Importance}

Our models take into account two external features, the price and margin practiced on the product. To understand how our model treats the covariate, we consider the partial dependance of our models to see if we could explain the variation of underlying weight function $\phi$. To compute the partial dependance graph for a feature $\theta$, we use the following procedures:

\begin{itemize}
    \item We consider the distribution of a given feature $\theta$ in the training set, and we split this distribution into $100$ bins.
    \item For each bin $i$ we compute the average value $\theta_i$ of the feature $\theta$ on the bins.
    \item We compute the average weight on the test set of $\phi(\Tilde{\theta}_{i,t})$, where $\Tilde{\theta}_{i,t}$ is the usual point of data of the test set where the feature $\theta$ has been replaced by $\theta_i$ 
    \item We then plot all the couple (average bin , average weight) .
\end{itemize}

All the partial dependence are computed for the smartphones category, for an horizon $h=4$.

\paragraph*{Partial dependence w.r.t. past proportion}
On Figure \ref{fig:part_dep_prop}, we present the partial dependence of the weight with respect to the last known market share. Logically, it is increasing. When the past sales proportion of the sales were important, it is a strong indication of the competitiveness of a product.

\paragraph*{Partial dependence w.r.t. prices}
On Figure \ref{fig:part_dep_price}, we present the partial dependence of the weight with respect to the price of the smartphone. First, let us note that the overall weight variation is largely smaller than in the last graph . The role of the raw price is less important.

The behavior is somehow odd. Until 500 \euro, increasing the price seems to increase the competitiveness of the product .It may reflect the higher quality of higher price smartphones. When the price is higher, the competitiveness decrease, which is easier to understand. 

\paragraph*{Partial dependence w.r.t. margin}
On Figure \ref{fig:part_dep_margin}, we present the partial dependence of the weight with respect to the margin of the smartphone. Margin is a better indicator than prices, because it takes into account the quality of the product, but suffers from other factors.

Here, when margin is positive, the weight seems to increase with the margin. It is also counter-intuitive, but can also be explained by pricing behavior of the company. When a product receive a great interest, the company can easily increase its margin. The company may also want to push forward products with higher margin.

When margin are negative however, i.e. when products are on sales, we observe a strong increase on its competitiveness.

\begin{figure}
    \centering
    \resizebox{\columnwidth}{!}{
    \includegraphics{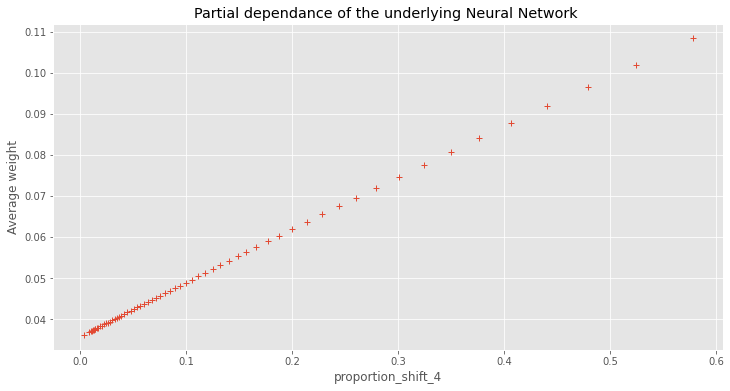} }
    \caption{Partial dependence with respect to the past proportion variables}
    \label{fig:part_dep_prop}
\end{figure}

\begin{figure}
    \centering
    \resizebox{\columnwidth}{!}{
    \includegraphics{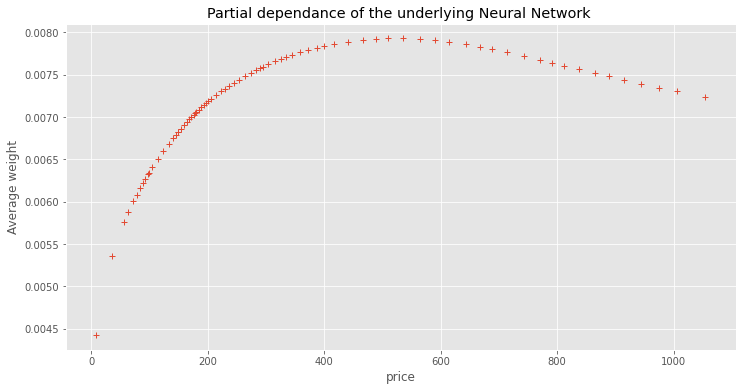} }
    \caption{Partial dependence with respect to the price}
    \label{fig:part_dep_price}
\end{figure}

\begin{figure}
    \centering
    \resizebox{\columnwidth}{!}{
    \includegraphics{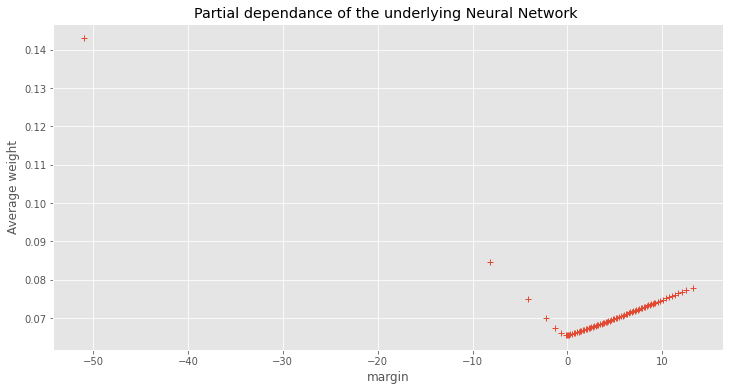} }
    \caption{Partial dependence with respect to the margin percentages}
    \label{fig:part_dep_margin}
\end{figure}
\section{Conclusion}

A model for concurrent time series is proposed in this article. It is based on the creation of unknown "competitiveness" quantity that depends on the characteristic of a product and previous information on its popularity. Under a relatively common condition, we establish a bound on the risk of our model. This bound follows the usual decay in $\mathcal{O}(\sqrt{ \frac{\log(\frac{1}{\delta})}{n}})$ observed in Machine Learning. 

We use this model on real-world data, using a Neural Network to compute the competitiveness using past sales values and other external features. This approach outperforms classical ML and times series estimators, especially for short and medium terms predictions. It also improves classical Neural Network estimators, especially when they are numerous products and when the competition between time series is high. It also partially avoid under-prediction that tends to affect other predictors.

It is possible to explain the weight function. However, the behavior of this function may be counter-intuitive due to the feedback loop.

\bibliographystyle{alpha}
\bibliography{biblioConc}

\section{Appendix}
\subsection{Moment bound for Poisson distribution}

\begin{lem}
\label{lem:SkellamBound}
    Let $X$ be a random variable following a Poisson distribution of parameters $\lambda$. Îf we note $M = max
    (1,\lambda e)$, we have the following bound on the moment of X:
    \[\E\big[ |X|^k \big] \leq k! M^{k} \]
\end{lem}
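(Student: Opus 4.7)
The plan is to prove the bound by induction on $k$ using a convenient recurrence for Poisson moments. Since $X \geq 0$ almost surely, $\E[|X|^k] = \E[X^k]$, so it suffices to bound the raw moments. The key identity I would use is
\[ \E[X^k] = \lambda\, \E[(X+1)^{k-1}], \]
which comes from differentiating the MGF relation $\E[X e^{tX}] = \lambda e^t \E[e^{tX}] = \lambda \E[e^{t(X+1)}]$ a total of $k-1$ times at $t=0$. This turns a $k$-th moment into an expression involving only moments of order $<k$, which is what an induction needs.

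The base case $k=0$ is immediate, and $k=1$ gives $\E[X]=\lambda \leq M$ (holds both when $M=1$, since then $\lambda \leq 1/e$, and when $M = e\lambda$). For the inductive step I would expand $(X+1)^{k-1}$ by the binomial theorem, apply the induction hypothesis $\E[X^j] \leq j!\, M^j$ termwise, and then reindex $i = k-1-j$ to obtain
\[ \E[X^k] \leq \lambda (k-1)!\, M^{k-1} \sum_{i=0}^{k-1} \frac{M^{-i}}{i!} \leq \lambda (k-1)!\, M^{k-1} e^{1/M}. \]
Comparing this to the target $k!\, M^k$, the inductive step reduces to verifying the single scalar inequality $\lambda\, e^{1/M} \leq kM$.

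The last step is to check this scalar inequality in both regimes defining $M$. When $M=1$ (i.e.\ $\lambda e \leq 1$), the inequality becomes $\lambda e \leq k$, which holds for all $k \geq 1$ since $\lambda e \leq 1$. When $M = e\lambda > 1$, we have $1/M < 1$ so $e^{1/M} < e$, and the inequality becomes $\lambda \cdot e^{1/(e\lambda)} \leq k e \lambda$, i.e.\ $e^{1/(e\lambda) - 1} \leq k$, which holds since the left side is at most $1$.

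I do not anticipate any serious obstacle: the recurrence is standard, the binomial expansion plus reindexing is routine, and the final verification splits cleanly along the definition of $M$. The only mildly delicate point is to choose the inductive bound in the tight form $j!\, M^j$ (rather than some weaker surrogate) so that the geometric sum $\sum_i M^{-i}/i!$ collapses to $e^{1/M}$ and feeds the final scalar inequality with exactly the right slack.
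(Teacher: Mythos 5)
Your proposal is correct and follows essentially the same route as the paper: the identity $\E[X^k]=\lambda\,\E[(X+1)^{k-1}]$ is exactly the recurrence the paper extracts from $g'(x)=\lambda e^x g(x)$ via Leibniz's rule, and the subsequent binomial expansion, reindexing to $\sum_i M^{-i}/i! \le e^{1/M}$, and closing scalar inequality (which both arguments reduce to $\lambda e \le M$) coincide. Your case check of the final inequality is slightly more explicit than the paper's, but the proof is the same.
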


\begin{proof}
The moment generating function of the X is 
\[g(x) = \exp(\lambda * (e^x - 1))\]

We note $m_k = g^{(k)}(0)$ the k-th moment of the distribution. The first derivative of $g$ verify 

\[g'(x) = \lambda \exp(x) g(x)\]

Using Leibniz formula, we have :

\[g^{k+1}(x) =  \lambda \sum_{i=0}^k \dbinom{n}{i} g^{(i)}(x) \exp(x)\]

For all $k$, we have the following recurrent relation

\[m_{k+1} \leq \lambda \sum_{i=0}^k \dbinom{n}{i} m_i \]

We will proove the hypothesis $H_k : m_k \leq k! M^k $ by induction. 
We have $m_0 = 1$ and $m_1 = \lambda$, so $H_0$ and $H_1$ are verified.
For $k>1$, if we suppose $(H_i)$ verified for all $i \leq K $:

\begin{align*}
    m_{k+1} & \leq  \lambda \sum_{i=0}^k \dbinom{k}{i} i! M^i \\
            & \leq  \lambda \sum_{i=0}^k \frac{k!}{(k-i)!} M^i \\
            & \leq  \lambda M^k k! \sum_{i=0}^k \frac{1}{i!} M^{-i} \\
            & \leq  \lambda M^k (k+1)! e^{\frac{1}{M}}
\end{align*}

 As $M \geq 1$ :

\[ m_{k+1} \leq \mu e  M^k (k+1)! \leq M^{k+1} (k+1)!   \]

\end{proof}

\end{document}